\newtheorem{theorem}{Theorem}
\providecommand{\cF}{\mathcal{F}}
\providecommand{\cN}{\mathcal{N}}
\providecommand{\cX}{\mathcal{X}}
\renewcommand{\Re}{\mathbb{R}}
\providecommand{\E}{\mathbb{E}}
\providecommand{\bl}{\begin{list}{$\square$}{\leftmargin=2 em \itemindent=0em}}
\providecommand{\el}{\end{list}}
\DeclareMathOperator*{\argmin}{arg\,min}
\DeclareMathOperator*{\argmax}{arg\,max}
\title{Nonparametric learning from Bayesian models with randomized objective functions}
\author{
  Simon Lyddon\\
  Department of Statistics\\
  University of Oxford\\
  Oxford, UK \\
  \texttt{lyddon@stats.ox.ac.uk} \\
  \And
  Stephen Walker\\
  Department of Mathematics\\
  University of Texas at Austin\\
  Austin, TX\\
  \texttt{s.g.walker@math.utexas.edu} \\
  \And
  Chris Holmes\\
  Department of Statistics\\
  University of Oxford\\
  Oxford, UK \\
  \texttt{cholmes@stats.ox.ac.uk} \\  
}
\begin{document}

\maketitle

\begin{abstract}
Bayesian learning is built on an assumption that the model space contains a true reflection of the data generating mechanism. This assumption is problematic, ~particularly in complex data environments. Here we present a Bayesian nonparametric approach to learning that makes use of statistical models, but does not assume that the model is true. Our approach has provably better properties than using a parametric model and admits a Monte Carlo sampling scheme that can afford massive scalability on modern computer architectures. The model-based aspect of learning is particularly attractive for regularizing nonparametric inference when the sample size is small, and also for correcting approximate approaches such as variational Bayes (VB). We demonstrate the approach on a number of examples including VB classifiers and Bayesian random forests.
\end{abstract}

\section{Introduction}\label{sec:intro}
Bayesian updating provides a principled and coherent approach to inference for probabilistic models \cite{robert2007bayesian}, but is predicated on the model class being true. That is, for an observation $x$ and a generative model $F_\theta(x)$ parametrized by a finite-dimensional parameter $\theta \in \Theta$, then for some parameter value $\theta_0 \in \Theta$ it is that $x \sim F_{\theta_0}(x)$. In reality, however, all models are false. If the data is simple and small, and the model space is sufficiently rich, then the consequences of model misspecification may not be severe. However, data is increasingly being captured at scale, both in terms of the number of observations as well as the diversity of data modalities. This poses a risk in conditioning on an assumption that the model is true.

In this paper we discuss a scalable approach to Bayesian nonparametric learning (NPL) from models without the assumption that $x \sim F_{\theta_0}(x)$. To do this we use a nonparametric prior that is centered on a model but does not assume the model to be true. 
A concentration parameter, $c$, in the nonparametric prior quantifies trust in the baseline model and this is subsequently reflected in the nonparametric update, through the relative influence given to the model-based inference for $\theta$. In particular, $c \rightarrow \infty$ recovers the standard model-based Bayesian update while $c=0$ leads to a Bayesian bootstrap estimator for the object of interest.

Our methodology can be applied in a number of situations, including:

\begin{itemize}
\item [{[S1]}] Model misspecification: where we have used a parametric Bayesian model and we are concerned that the model may be misspecified. 
\item [{[S2]}] Approximate posteriors: where for expediency we have used an approximate posterior, such as in variational Bayes (VB), and we wish to account for the approximation.
\item [{[S3]}] Direct updating from utility-functions: where the sole purpose of the modelling task is to perform some action or take a decision under a well-specified utility function.
\end{itemize}

Our work builds upon previous ideas including \cite{Newton1994} who introduced the weighted likelihood bootstrap (WLB) as a way of generating approximate samples from the posterior of a  well-specified Bayesian model. \cite{Lyddon2018} highlighted that the WLB in fact provides an exact representation of uncertainty for the model parameters that minimize the Kullback-Leibler (KL) divergence, $d_{\text{KL}}(F_0,F_\theta)$, between the unknown data-generating distribution and the model likelihood $f_\theta(x)$, and hence is well motivated regardless of model validity. These approaches however do not allow for the inclusion of prior knowledge and do not provide a Bayesian update as we do here.

A major underlying theme behind our paper, and indeed an open field for future research, is the idea of obtaining targeted posterior samples via the maximization of a suitably randomized objective function. The WLB randomizes the log-likelihood function, effectively providing samples which are randomized maximum likelihood estimates, whereas we randomize a more general objective function under a Bayesian nonparametric (NP) posterior. The randomization takes into account knowledge captured through the choice of a model and parametric prior.

\section{Foundations of Nonparametric Learning}\label{sec:foundations}

We begin with the simplest scenario, namely [S1], concerning a possibly misspecified model before moving on to more complicated situations. It is interesting to note that all of what follows can also be considered from a viewpoint of NP regularization, using a parametric model to centre a Bayesian NP analysis in a way that induces stability and parametric structure to the problem.

\subsection{Bayesian updating of misspecified models}

Suppose we have a parametric statistical model, $\cF_\Theta = \{ f_\theta(\cdot); \ \theta \in \Theta ) \}$, where for each $\theta \in \Theta \subseteq \Re^p$,  $f_\theta : \cX \rightarrow \Re$ is a probability density. The conventional approach to Bayesian learning involves updating a prior distribution to a posterior through Bayes' theorem. This approach is well studied and well understood \cite{Bernardo2006}, but formally assumes that the model space contains the true data-generating mechanism. We will derive a posterior update under weaker assumptions. 

Suppose that $\cF_\Theta$ has been selected for the purpose of a prediction, or a decision, or some other modelling task. Consider the thought experiment where the modeller somehow gains access to Nature's true sampling distribution for the data, $F_0(x)$, which does not necessarily belong to $\cF_\Theta$. How should they then update their model? 

With access to $F_0$ the modeller can simply request an infinite training set, ${x}_{1:\infty} \stackrel{iid}{\sim} F_0$, 
and then update to the posterior ${\pi}(\theta | {x}_{1:\infty})$. Under an infinite sample size all uncertainty is removed and for regular models the posterior concentrates at a point mass at $\theta_0$, the parameter value maximizing the expected log-likelihood, assuming that the prior has support there; i.e.

\begin{eqnarray*}\label{eq:inf_sample}
\theta_0 = \argmax_{\theta\in\Theta} \lim_{n\to\infty} n^{-1} \sum_{i=1}^{n} \log f_\theta(x_i) = \argmax_{\theta\in\Theta} \int_{\cX} \log f_\theta(x)\,d F_0(x).
\end{eqnarray*}

It is straightforward to see that ${\theta}_0$ minimizes the KL divergence from the true data-generating mechanism to a density in $\cF_\Theta$ 
\begin{equation}\label{eq:theta_0}
\theta_0 = \argmax_{\theta \in \Theta} \int_\cX \log f_\theta(x) dF_0(x) = \argmin_{\theta\in\Theta} \int_\cX \log \frac{f_0(x)}{f_\theta(x)} dF_0(x).
\end{equation}
This is true regardless of whether $F_0$ is in the model space of $\cF_\Theta$ and is well-motivated as the target of statistical model fitting \cite{Akaike1981a,burnham2003model,Walker2013,Bissiri2016}.

Uncertainty in this unknown value $\theta_0$ flows directly from uncertainty in $F_0$. Of course $F_0$ is unknown, but being ``Bayesian'' we can place a prior on it, $\pi(F)$, for $F \in \cF$, that should reflect our honest uncertainty about $F_0$. Typically the prior should have broad support unless we have special knowledge to hand, which is a problem with a parametric modelling approach that only supports a family of distribution functions indexed by a finite-dimensional parameter. The Bayesian NP literature however provides a range of priors for this sole purpose \cite{hjort2010bayesian}. Once a prior for $F$ is chosen, the correct way to propagate uncertainty about $\theta$ comes naturally from the posterior distribution for the law ${\cal L}[\theta(F) | x_{1:n}]$, via ${\cal L}[F | x_{1:n}]$, where $\theta(F) = \argmax_{\theta\in\Theta} \int \log f_\theta(x) dF(x)$. The posterior for the parameter is then captured in the marginal by treating $F$ as a latent auxiliary probability measure,
\begin{equation}\label{eq:f_marg}
\tilde{\pi}(\theta \mid x_{1:n} )  = \int_\cF \pi(\theta, dF \mid x_{1:n} )  = \int_\cF \pi(\theta \mid F) \pi(dF \mid x_{1:n} ),
\end{equation}
where $\pi(\theta | F)$ assigns probability $1$ to $\theta=\theta(F)$. We use $\tilde{\pi}$ to denote the NP update to distinguish it from the conventional Bayesian posterior $\pi(\theta|x_{1:n}) \propto \pi(\theta)\prod_{i=1}^n f_{\theta}(x_i)$, noting that in general the nonparametric posterior $\tilde{\pi}(\theta \mid x_{1:n} )$  will be different to the standard Bayesian update as they are conditioning on different states of prior knowledge. In particular, as stated above, $\pi(\theta | x_{1:n})$ assumes artificially that $F_0 \in \cF_\Theta$. 

\subsection{An NP prior using a MDP} 
For our purposes, the mixture of Dirichlet processes (MDP) \cite{Antoniak1974} is a convenient vehicle for specifying prior beliefs $\pi(F)$ centered on parametric models.\footnote{The MDP should not to be confused with the Dirichlet process mixture model (DPM) \cite{Lo1984}.}
The MDP prior can be written as
\begin{equation}\label{eq:MDP}
[F \mid \theta] \sim \text{DP}(c, f_\theta(\cdot) ) ; \qquad  \theta \sim \pi(\theta).
\end{equation}
This is a mixture of standard Dirichlet processes with mixing distribution or hyper-prior $\pi(\theta)$, and concentration parameter $c$.
We write this as $F \sim \text{MDP}(\pi(\theta), c, f_\theta(\cdot) )$. 

The MDP provides a practical, simple posterior update.
From the conjugacy property of the DP applied to (\ref{eq:MDP}), we have the conditional posterior update given data $x_{1:n}$, as
\begin{equation}\label{eq:MDP_posterior_condl}
[F \mid \theta, x_{1:n}] \sim \text{DP}\left( c+n, \ \frac{c}{c+n}f_\theta(\cdot) + \frac{1}{c+n} \sum_{i=1}^n \delta_{x_i}(\cdot) \right)
\end{equation}
where $\delta_x$ denotes the Dirac measure at $x$. The concentration parameter $c$ is an effective sample size, governing the trust we have in $f_{\theta}(x)$.  The marginal posterior distribution for ${\cal L}[F | x_{1:n}]$ can be written as 
\begin{equation}\label{eq:mdp-marg}
\pi(dF \mid x_{1:n} ) = \int_\Theta \pi(dF \mid \theta, x_{1:n} ) \, \pi(\theta \mid x_{1:n} ) \, d\theta,
\end{equation}
i.e. 
\begin{equation}\label{eq:MDP_posterior_full}
[F \mid x_{1:n}] \sim \text{MDP}\left( \pi(\theta \mid x_{1:n}), \ c+n, \ \frac{c}{c+n} f_\theta(\cdot) +\frac{1}{c+n}\sum\limits_{i=1}^n \delta_{x_i}(\cdot) \right). 
\end{equation}
The mixing distribution $\pi(\theta | x_{1:n})$ coincides with the parametric Bayesian posterior, $\pi(\theta | x_{1:n})$, assuming there are no ties in the data \cite{Antoniak1974}, although as noted above it does not follow that the NP marginal $\tilde{\pi}(\theta | x_{1:n})$ is equivalent to the parametric Bayesian posterior $\pi(\theta | x_{1:n}) $. 

We can see from the form of the conditional MDP (\ref{eq:MDP_posterior_condl}) that the sampling distribution of the centering model, $f_\theta(x)$, regularizes the influence of the empirical data $\sum_{i=1}^n \delta_{x_i}(\cdot)$. The resulting NP posterior (\ref{eq:mdp-marg}) combines the information from the posterior distribution of the centering model $\pi(\theta | x_{1:n})$ with the information in the empirical distribution of the data. This leads to a simple and highly parallelizable Monte Carlo sampling scheme as shown below.

\subsection{Monte Carlo conditional maximization} 
The marginal in (\ref{eq:f_marg}) facilitates a Monte Carlo estimator for functionals of interest under the posterior, which we write as $G = \int_\Theta g(\theta) \tilde{\pi}(\theta | x_{1:n} ) d \theta$. This is achieved by sampling 
$\pi( \theta, dF | x_{1:n})$ jointly from the posterior,  
\begin{eqnarray}\label{eq:FF_marg}
\int_\Theta g(\theta) \tilde{\pi}(\theta \mid x_{1:n} ) d \theta & \approx  & \frac{1}{B} \sum_{i=1}^{B} g(\theta^{(i)})  \nonumber \\ 
 \theta^{(i)} ~ =  ~ \theta(F^{(i)}) & = & \argmax_{\theta \in \Theta} \int_\cX \log f_{\theta}(x) dF^{(i)}(x) \label{eq:FF_marg1} \\
F^{(i)} & \sim & \pi(dF \mid  x_{1:n}). \label{eq:FF_marg2}
\end{eqnarray}
This involves an independent Monte Carlo draw (\ref{eq:FF_marg2}) from the MDP marginal followed by a conditional maximization of an objective function (\ref{eq:FF_marg1}) to obtain each $\theta^{(i)}$. This Monte Carlo conditional maximization (MCCM) sampler is highly amenable to fast implementation on distributed computer architectures; given the parametric posterior samples, each NP posterior sample, $F^{(i)}$, can be computed independently and in parallel from (\ref{eq:FF_marg2}).

We can see from (\ref{eq:MDP_posterior_full}) that the parametric posterior samples are not required if $c=0$. If $c>0$ it may be computationally intensive to generate samples from the parametric posterior. However, as we will see next, we do need to sample from this posterior directly. This makes the approach particularly attractive to fast, tractable approximations for $\pi(\theta | x_{1:n})$, such as a variational Bayes (VB) posterior approximation. The NP update corrects for the approximation in a computationally efficient manner, leading to a 
posterior distribution with optimal properties as shown below.

\subsection{A more general construction}\label{sec:general}
So far we have assumed, hypothetically, that:

\begin{enumerate}[(i)]
\item the modeller is interested in learning about the MLE under an infinite sample size, $\theta_0=\argmax_\theta\int\log f_\theta(x)dF_0(x)$, rather than $\alpha_0 = \argmax_\alpha \int u(x, \alpha) dF_0(x)$ more generally, for a utility function $u(x, \alpha)$.
\item the parametric mixing distribution $\pi(\theta | x_{1:n})$ of the MDP posterior in (\ref{eq:MDP_posterior_full}) is constructed from the same centering model that defines the target parameter, $\theta_0 = \argmax_\theta \int \log f_\theta(x) dF_0(x)$.

\end{enumerate}

Both of these assumptions can be relaxed. For the latter case, it is valid to use a tractable parametric mixing distribution $\pi(\gamma | x_{1:n} )$ and baseline model $f_{\gamma}$, while still learning about $\theta_0$ in (\ref{eq:theta_0}) through the marginal $\tilde{\pi}(\theta | x_{1:n} )$ as in (\ref{eq:f_marg}) obtained via $\theta(F)$ and 
\begin{equation}[F \mid x_{1:n}] \sim \text{MDP}\left( \ \pi(\gamma \mid x_{1:n}), \ c+n, \ \frac{c}{c+n} f_\gamma(\cdot) + \frac{1}{c+n} \sum_i\delta_{x_i} \right).
\end{equation}
For (i), we can use the mapping $\alpha(F) = \argmax_\alpha \int u(x, \alpha) dF(x)$ to derive the NPL posterior on actions or parameters maximizing some expected utility under a model-centered MDP posterior. This can be written as $\tilde{\pi}(\alpha | x_{1:n} )  = \int \pi(\alpha | F) \pi(dF | x_{1:n} )$, where $\pi(\alpha | F)$ assigns probability $1$ to $\alpha=\alpha(F)$.

This highlights a major theme of the paper: the idea of obtaining posterior samples via maximization of a suitably randomized objective function. In generality the target is $\alpha_0 = \argmax_{\alpha} \int u(x, \alpha) dF_0(x)$, obtained by maximizing an objective function, and the randomization arises from the uncertainty in $F_0$ through $\pi(F | x_{1:n})$ that takes into account the information, and any misspecification, associated with a parametric centering model. 

\subsection{The Posterior Bootstrap algorithm}\label{Subsec:posterior bootstrap}
We will use the general construction of Section \ref{sec:general} to describe a sampling algorithm. We assume we have access to samples from the posterior parametric mixing distribution, $\pi(\gamma | x_{1:n})$, in the MDP. In the case of model misspecification, [S1], if the data contains no ties, this is simply the parametric Bayesian posterior under $\{f_{\gamma}(x), \pi(\gamma)\}$, for which there is a large literature of computational methods available for sampling - see for example \cite{robert2005}. If there are ties then we refer the reader to \cite{Antoniak1974} or note that we can simply break ties by adding a new pseudo-variable, such as $x^* \sim N(0, \epsilon^2)$ for small $\epsilon$. 

The sampling algorithm, found in Algorithm \ref{algo:mdp-training}, is a mixture of Bayesian posterior bootstraps. After a sample $\gamma^{(i)}$ is drawn from the mixing posterior, $\pi(\gamma | x_{1:n})$, a posterior pseudo-sample is generated, $x^{(i)}_{(n+1):(n+T)} \stackrel{iid}{\sim} f_{\gamma^{(i)}}(x)$,  and added to the dataset, which is then randomly weighted. The parameter under this implicit distribution function is then computed as the solution of an optimization problem.

\begin{figure}
\begin{algorithm}[H]
\SetAlgoLined
\KwData{	Dataset $x_{1:n} = (x_1,\ldots,x_n)$. \\
				Parameter of interest $\alpha_0 = \alpha(F_0) = \argmax_\alpha \int u(x, \alpha) dF_0(x)$. \\
				Mixing posterior $\pi(\gamma | x_{1:n})$, concentration parameter $c$, centering model $f_\gamma(x)$.\\
				Number of centering model samples $T$.}
\Begin{
\For{$i = 1,\ldots,B$}{
Draw centering model parameter $\gamma^{(i)} \sim \pi(\gamma | x_{1:n} )$\;
Draw posterior pseudo-sample $x_{(n+1):(n+T)}^{(i)} \stackrel{iid}{\sim} f_{\gamma^{(i)}}$\;
Generate weights $(w^{(i)}_1,\ldots,w^{(i)}_n,w^{(i)}_{n+1},\ldots,w^{(i)}_{n+T}) \sim \text{Dirichlet}(1,\ldots,1,c/T,\ldots,c/T)$\;
 Compute parameter update\\ $\,\qquad \tilde{\alpha}^{(i)} = \argmax_\alpha \left\{ \sum\limits_{j=1}^{n} w_j^{(i)} u(x_j, \alpha) +  \sum\limits_{j=1}^{T} w_{n+j}^{(i)} u(x_{n+j}^{(i)}, \alpha) \right\};$
}
Return NP posterior sample $\{\tilde{\alpha}^{(i)} \}_{i=1}^B$.
}
\caption{The Posterior Bootstrap}\label{algo:mdp-training}
\end{algorithm}
\end{figure}

Note for the special case of correcting model misspecification [S1], we have $\gamma \equiv \theta$, $f_\gamma(\cdot) \equiv f_\theta(\cdot)$, $\pi(\gamma | x_{1:n}) \equiv \pi(\theta | x_{1:n})$, $\alpha \equiv \theta$, $u(x, \alpha) \equiv \log f_\theta(x)$, so that the posterior sample is given by 
$$
\tilde{\theta}^{(i)} = \argmax_\theta \left\{ \sum\limits_{j=1}^n w_j^{(i)} \log f_\theta(x_j) + \sum\limits_{j=1}^T w_{n+j}^{(i)} \log f_\theta(x_{n+j}^{(i)}) \right\}.
$$
where $w^{(i)} \sim \text{Dirichlet}(\cdot)$ following Algorithm \ref{algo:mdp-training} and $x_{(n+1):(n+T)}^{(i)}$ are $T$ synthetic observations drawn from the parametric sampling distribution under $\theta^{(i)}$ which itself is drawn from $\pi(\theta | x_{1:n})$. 
We leave the concentration parameter $c$ to be set subjectively by the practitioner, representing faith in the parametric model. Some further guidance to the setting of $c$ can be found in Section 1 of the Supplementary Material.

\subsection{Adaptive Nonparametric Learning: aNPL}\label{subsec:aNPL}

Instead of the Dirichlet distribution approximation to the Dirichlet process, we propose an alternative stick-breaking procedure that has some desirable properties. This procedure entails following the usual DP stick-breaking construction \cite{Sethuraman1994} for the model component of the MDP posterior, by repeatedly drawing $\text{Beta}(1,c)$-distributed stick breaks, but terminating when the unaccounted for probability measure $\prod_j (1 - v_j)$, multiplied by the average mass assigned to the model, $c/(c+n)$, drops below some threshold $\epsilon$ set by the user. This adaptive nonparametric learning (aNPL) algorithm is written out in full in Section 2 of the Supplementary Material.

One advantage of this approach is that a number of theoretical results then hold, as for large enough $n$, under this adaptive scheme the parametric model is in effect `switched off', and essentially the MDP with $c=0$ is used to generate posterior samples. This is an interesting notion in itself. For small samples, we prefer the regularization that our model provides, though as $n$ grows the average probability mass assigned to the model decays like $(c+n)^{-1}$, as seen in (\ref{eq:MDP_posterior_condl}). In the adaptive version, we agree a hard threshold at which point we discard the model entirely and allow the data to speak for itself. We set this point at a level such that we are a priori comfortable that there is enough information in the sample alone with which to quantify uncertainty in our parameter of interest. For example, $\epsilon = 10^{-4}$ and $c=1$ only utilizes the centering model for $n < 10,000$. Further, we could use this idea to set $c$: this quantity is determined if a tolerance level, $\epsilon$, and a threshold $n_{\text{max}}$ over which the parametric model would be discarded, are provided by the practitioner.

\subsection{Properties of NPL}\label{sec:properties}
Bayesian nonparametric learning has a number of important properties that we shall now describe.

\textbf{Honesty about correctness of model.} Uncertainty in the data-generating mechanism is quantified via a NP update that takes into account the model likelihood, prior, and concentration parameter $c$. Uncertainty about model parameters flows from uncertainty in the data-generating mechanism.\\
\textbf{Incorporation of prior information.} The prior for $\theta$ is naturally incorporated as a mixing distribution for the MDP. This is in contrast to a number of Bayesian methods with similar computational properties but that do not admit a prior \cite{Newton1994,Chamberlain2003}.\\
\textbf{Parallelized bootstrap computation.}
As shown in Section \ref{Subsec:posterior bootstrap}, NPL is trivially parallelizable through a Bayesian posterior bootstrap and can be coupled with misspecified models or approximate posteriors to deliver highly scalable and exact inference.\\
\textbf{Consistency.} Under mild regularity, all posterior mass concentrates in any neighbourhood of $\theta_0$ as defined in (\ref{eq:theta_0}), as the number of observations tends to infinity. This follows from an analogous property of the DP (see, for example \cite{hjort2010bayesian}).\\
\textbf{Standard Bayesian inference is recovered as $\boldsymbol{c \to \infty}$.}
This follows from the property of the DP that it converges to the prior degenerate at the base probability distribution in the limit of $c\rightarrow\infty$.\\
\textbf{Non-informative learning with $\boldsymbol{c =0}$.}
If no model or prior is available, setting $c=0$ recovers the WLB. This has an exact interpretation as an objective NP posterior \cite{Lyddon2018}, where the asymptotic properties of the misspecified WLB were studied. \cite{Muller2013} demonstrated the suboptimality of a misspecified Bayesian posterior, asymptotically, relative to an asymptotic normal distribution with the same centering but a sandwich covariance matrix \cite{Huber1967}. We will see next that for large samples the misspecified Bayesian posterior distribution is predictively suboptimal as well.\\
\textbf{A superior asymptotic uncertainty quantification to Bayesian updating.}
A natural way to compare posterior distributions is by measuring their predictive risk, defined as the expected KL divergence of the posterior predictive to $F_0$. We consider only the situation where there is an absence of strong prior information, following \cite{Shimodaira2000,Fushiki2005}.

We say predictive $\pi_1$ asymptotically dominates $\pi_2$ up to $o(n^{-k})$ if for all distributions $q$ there exists a non-negative and possibly positive real-valued functional $K(q(\cdot))$ such that: 
\begin{equation*}
\E_{x_{1:n}\sim q} \,d_{\text{KL}}(q(\cdot), \pi_2(\,\cdot \mid x_{1:n} ) ) - \E_{x_{1:n} \sim q}\, d_{\text{KL}}(q(\cdot), \pi_1(\,\cdot \mid x_{1:n}) ) \,=\, K(q(\cdot)) + o(n^{-k}).
\end{equation*}

We have the following theorem about the asymptotic properties of the MDP with $c=0$. This result holds for aNPL, as the model component is ignored for suitably large $n$.


\begin{theorem}
The posterior predictive of the MDP with $c=0$ asymptotically dominates the standard Bayesian posterior predictive up to $o(n^{-1})$.
\end{theorem}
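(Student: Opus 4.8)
The plan is to reduce both predictive distributions to a common form and then to expand their expected predictive KL risk to order $n^{-1}$, following the program of \cite{Shimodaira2000,Fushiki2005}. The standard Bayesian posterior predictive is $\int f_\theta(y)\,\pi(\theta\mid x_{1:n})\,d\theta$ and the NPL predictive is $\int f_\theta(y)\,\tilde{\pi}(\theta\mid x_{1:n})\,d\theta$; each is a mixture of the likelihood $f_\theta(y)$ over a mixing law that is asymptotically Gaussian, centred at the MLE $\hat{\theta}_n$ with a covariance of order $n^{-1}$. The only feature distinguishing the two is the shape of this covariance: the misspecified Bernstein--von Mises limit gives the Bayesian mixing covariance as $n^{-1}J(\theta_0)^{-1}$, whereas Theorem \ref{theorem1} gives the MDP-with-$c=0$ (WLB) mixing covariance as the sandwich $n^{-1}J(\theta_0)^{-1}I(\theta_0)J(\theta_0)^{-1}$. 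Write $\Sigma_B = J^{-1}$ and $\Sigma_W = J^{-1}IJ^{-1}$, all matrices evaluated at $\theta_0$.

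First I would expand a generic predictive $\hat{p}_n(y) = \int f_\theta(y)\,d\Pi_n(\theta)$ about its centre. Since $\Pi_n$ has mean $\hat{\theta}_n$ the first-order term vanishes, giving $\hat{p}_n(y) = f_{\hat{\theta}_n}(y) + \frac{1}{2n}\sum_{a,b}\Sigma_{ab}\,\partial_a\partial_b f_{\hat{\theta}_n}(y) + o(n^{-1})$, and hence $\log \hat{p}_n(y) = \log f_{\hat{\theta}_n}(y) + \frac{1}{2n}\sum_{a,b}\Sigma_{ab}\,\{\partial_a\partial_b\log f_{\hat{\theta}_n}(y) + \partial_a\log f_{\hat{\theta}_n}(y)\,\partial_b\log f_{\hat{\theta}_n}(y)\} + o(n^{-1})$.

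Next I would compute $\E_{x_{1:n}}\,d_{\text{KL}}(f_0,\hat{p}_n)$ to order $n^{-1}$. Expanding $\int f_0\log f_{\hat{\theta}_n}$ about $\theta_0$, using the first-order condition $\int \nabla\log f_{\theta_0}\,dF_0=0$ and $\int \nabla^2\log f_{\theta_0}\,dF_0 = -J$, and taking expectations with $\E[(\hat{\theta}_n-\theta_0)(\hat{\theta}_n-\theta_0)^T] = n^{-1}J^{-1}IJ^{-1}+o(n^{-1})$, produces a centring contribution $\frac{1}{2n}\mathrm{tr}(J^{-1}I)$ that is identical for both predictives (both are centred at the MLE, and the $O(n^{-1})$ prior/bootstrap mean shifts contribute only at $o(n^{-1})$). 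The correction term integrates, at $\theta_0$, against $\int f_0\,\partial_a\partial_b f_{\theta_0}/f_{\theta_0} = (I-J)_{ab}$, giving a spread contribution $-\frac{1}{2n}\mathrm{tr}(\Sigma(I-J))$ that is the only method-dependent piece. Thus $\E\,d_{\text{KL}}(f_0,\hat{p}_n) = d_{\text{KL}}(f_0,f_{\theta_0}) + \frac{1}{2n}\mathrm{tr}(J^{-1}I) - \frac{1}{2n}\mathrm{tr}(\Sigma(I-J)) + o(n^{-1})$.

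Finally, subtracting the Bayesian risk from the NPL risk and using $\Sigma_W-\Sigma_B = J^{-1}(I-J)J^{-1}$, the difference collapses to $K(q) = \frac{1}{2n}\mathrm{tr}\big(J^{-1}(I-J)J^{-1}(I-J)\big) = \frac{1}{2n}\|J^{-1/2}(I-J)J^{-1/2}\|_F^2 \ge 0$, with equality precisely when $I=J$, i.e. under correct specification; since $f_0$ was arbitrary this holds for every generating $q$, which is exactly the claimed domination up to $o(n^{-1})$. The main obstacle is not this formal calculation but the rigorous control of the remainder: one must justify interchanging the second-order expansion with the expectations over $x_{1:n}$ and over the two mixing laws, showing the neglected terms are genuinely $o(n^{-1})$. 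Under misspecification one cannot lean on the usual exact Bayesian identities, so this needs uniform-integrability and moment/tail conditions on the score and its derivatives together with negligibility of the mixing-law tails away from $\hat{\theta}_n$ — the regularity conditions of \cite{Fushiki2005} — while importing the misspecified Bernstein--von Mises limit to pin down $\Sigma_B$ is the other ingredient that must be supplied.
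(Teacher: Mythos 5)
Your proposal is correct in substance, but it takes a genuinely different route from the paper. The paper's proof is a pure reduction: it cites \cite{Fushiki2005} for the fact that the (Efron) bootstrap predictive dominates the Bayesian predictive up to $o(n^{-1})$, cites Theorem 1 of \cite{Fushiki2010} for the fact that the Bayesian bootstrap (i.e.\ MDP with $c=0$) predictive agrees with the bootstrap predictive to $o_p(n^{-3/2})$, and then asserts that a Taylor expansion transfers this closeness of densities into closeness of risks at order $o(n^{-1})$, so that Fushiki's domination statement survives with the MDP predictive substituted. You instead redo the Fushiki-style risk expansion directly for the two predictives at hand: both are mixtures over mixing laws centred at $\hat{\theta}_n$ with $O(n^{-1})$ covariance, the misspecified Bernstein--von Mises theorem gives $\Sigma_B=J^{-1}$ for Bayes, Theorem \ref{theorem1} gives the sandwich $\Sigma_W=J^{-1}IJ^{-1}$ for the MDP with $c=0$, and the second-order expansion of the expected KL risk leaves $\frac{1}{2n}\mathrm{tr}\bigl(J^{-1}(I-J)J^{-1}(I-J)\bigr)=\frac{1}{2n}\bigl\|J^{-1/2}(I-J)J^{-1/2}\bigr\|_F^2\ge 0$ as the risk difference; your intermediate steps check out (e.g.\ they recover the Bayes risk $d_{\text{KL}}(f_0,f_{\theta_0})+\frac{p}{2n}+o(n^{-1})$, and the gap vanishes when $I=J$). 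What each approach buys: the paper's route is short and delegates all remainder control to the cited results, at the price of needing the extra equivalence result of \cite{Fushiki2010} and of leaving the density-to-risk transfer step implicit (that step needs exactly the uniform-integrability control you flag); your route is self-contained, bypasses \cite{Fushiki2010} entirely, ties the theorem directly to Theorem \ref{theorem1} already quoted in the paper, and exhibits an explicit, interpretable domination gap $K(q)$ that the paper's proof never displays --- at the cost of importing the misspecified Bernstein--von Mises theorem and supplying the moment/tail conditions yourself. Two small cautions: your phrase ``equality precisely under correct specification'' should read ``precisely when $I(\theta_0)=J(\theta_0)$,'' which is implied by, but not equivalent to, correct specification; and note that your $K(q)$ carries the $\frac{1}{2n}$ factor, which matches the form of Fushiki's Theorem 2 and the paper's intended reading of its domination definition, even though the definition as literally written asks for an $n$-free functional.
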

\begin{proof}
In \cite{Fushiki2005} the bootstrap predictive is shown to asymptotically dominate the standard Bayesian predictive up to $o(n^{-1})$. In Theorem 1 of \cite{Fushiki2010}, the predictive of the MDP with $c=0$ and the bootstrap predictive are shown to be equal up to $o_p(n^{-3/2})$. A Taylor expansion argument shows that the predictive risk of the MDP  with $c=0$ has the same asymptotic expansion up to $o(n^{-1})$ as that of the bootstrap. Thus Theorem 2 of \cite{Fushiki2005} can be proven with the predictive of the MDP with $c=0$ in place of the bootstrap predictive. Thus the predictive of the MDP with $c=0$ must also dominate the standard Bayesian predictive up to $o(n^{-1})$.
\end{proof}


\section{Illustrations}\label{sec:illustrations}

\subsection{Exponential family, [S1]}
Suppose the centering model is an exponential family with parameter $\theta$ and sufficient statistic $s(x)$,
\begin{equation*}\label{eq:exp_family}
\cF_\Theta = \left\{ f_\theta(x) = g(x) \exp \left\{ \theta^T s(x) - K(\theta) \right\}; \ \theta \in \Theta \right\}.
\end{equation*}

Under assumed regularity, by differentiating under the integral sign of  (\ref{eq:theta_0}) we find that our parameter of interest must satisfy $\E_{F_0} s(x) = \nabla_\theta K(\theta_0)$. For a particular $F$ drawn from the posterior bootstrap, the expected sufficient statistic is 
\begin{equation*}
\nabla_\theta K(\tilde{\theta}) = \lim_{T\to \infty} \left\{ \sum\limits_{j=1}^n w_j s(x_j) +  \sum\limits_{j=n+1}^{n+T} w_j s(x_j) \right\}.
\end{equation*}
with $\tilde{\theta}$ the NP posterior parameter value, weights $w_{1:(n+T)}$ arising from the Dirichlet distribution as set out in Algorithm \ref{algo:mdp-training}, and $x_j \sim f_\theta(\cdot)$ for $j=n+1,\ldots,n+T$, with $\theta$ drawn from the parametric posterior. This provides a simple geometric interpretation of our method, as convex combinations of (randomly-weighted) empirical sufficient statistics and model sufficient statistics from the parametric posterior.  The distribution of the random weights is governed by $c$ and $n$ only. Our method generates stochastic maps from misspecified posterior samples to corrected NP posterior samples, by incorporating information in the data over and above that captured by the model.

\subsection{Updating approximate posteriors [S2]: Variational Bayes uncertainty correction}

Variational approximations to Bayesian posteriors are a popular tool for obtaining fast, scalable but approximate Bayesian posterior distributions \cite{bishop2006pattern, Blei2017}. 
The approximate nature of the variational update can be accounted for using our approach. Figure \ref{fig:bishop_gaussian} shows a mean-field normal approximation $q$ to a correlated normal posterior $p$, an example similar to one from \cite{bishop2006pattern}, Section 10.1. We generated 100 observations from a bivariate normal distribution, centered at $(\frac{1}{2},\frac{1}{2})$, with dimension-wise variances both equal to $1$ and correlation equal to $0.9$, and independent normal priors on each dimension, both centered at $0$ with variance $10^2$. Each posterior contour plotted is based on $10,000$ posterior samples.

By applying the posterior bootstrap with a VB posterior (VB-NPL) in place of the Bayesian posterior, we recover the correct covariance structure for decreasing prior concentration $c$. If instead of $d_{\text{KL}}(q,p)$ we use $d_{\text{KL}}(p,q)$ as the objective, as it is for expectation propagation, the model posterior uncertainty may be overestimated, but is still corrected by the posterior bootstrap.
\begin{figure}[t]
  \centering
   \includegraphics[scale=1]{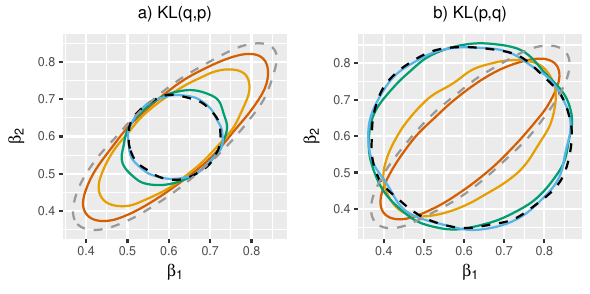}
  \caption{Posterior 95\% probability contour for a bivariate Gaussian, comparing VB-NPL with $c\in\{1,10^2,10^3,10^4\}$ (red, orange, green, blue respectively) to the known Bayes posterior (grey dashed line) and a VB approximation (black dashed line).}
  \label{fig:bishop_gaussian}
\end{figure}

We demonstrate this in practice through a VB logistic regression model fit to the Statlog German Credit dataset, containing 1000 observations and 25 covariates (including intercept), from the UCI ML repository \cite{Dua:2017}, preprocessing via \cite{Fernandez-Delgado2014}. An independent normal prior with variance $100$ was assigned to each covariate, and 1000 posterior samples were generated for each method. We obtain a mean-field VB sample using automatic differentiation variational inference (ADVI) in Stan \cite{Kucukelbir2015a}. When generating synthetic samples for the posterior bootstrap, both features and classes are needed. Classes are generated, given features, according to the probability specified by the logistic distribution. In this example (and the example in Section \ref{subsec:BRF}) we repeatedly re-use the features of the dataset for our pseudo-samples. In Fig. \ref{fig:beta_21_22} we show that the NP update effectively corrects the VB approximation for small values of $c$. Of course, local variational methods can provide more accurate posterior approximations to Bayesian logistic posteriors \cite{Jaakkola1997}, though these too are approximations, that NP updating can correct.

\textbf{Comparison with Bayesian logistic regression.} The conventional Bayesian logistic regression assumes the true log-odds of each class is linear in the predictors, and would use MCMC for inference \cite{Polson2013}. The MCMC samples, shown as points in Fig. \ref{fig:beta_21_22}, are a good match to the NPL update but MCMC requires a user-defined burn-in and convergence checking. The runtime to generate 1 million samples by MCMC (discarding an equivalent burn-in), was 33 minutes, compared to 21 seconds with NPL, using an m5.24xlarge AWS instance with 96 vCPUs; a speed-up of 95 times. Additionally NPL has provably better predictive properties, as detailed in Section \ref{sec:properties}.

\begin{figure}[t]
  \centering
   \includegraphics[scale=0.9]{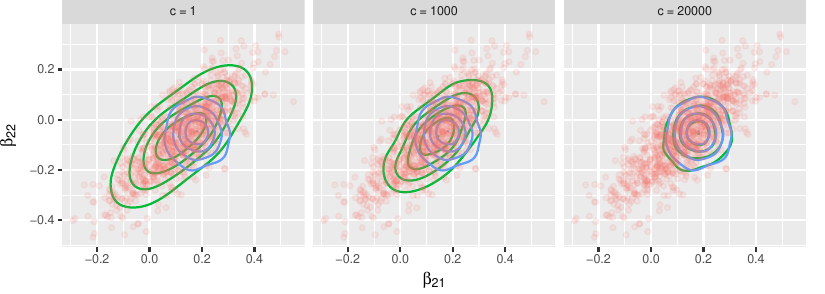}
  \caption{Posterior contour plot for $\beta_{22}$ vs $\beta_{21}$, for VB-NPL (green) and VB (blue), for three values of $c$. Scatter plot is a Bayesian logistic posterior sample (red) via a Polya-Gamma scheme.}
  \label{fig:beta_21_22}
\end{figure}

\subsection{Directly updating the prior: Bayesian Random Forests, using synthetic generated data}\label{subsec:BRF}
Random forests (RF) \cite{Breiman2001} is an ensemble learning method that is widely used and has demonstrated excellent general performance \cite{Fernandez-Delgado2014}. We construct a Bayesian RF (BRF), via NPL with decision trees, under a prior mixing distribution (a variant of [S1]). This enables the incorporation of prior information, via synthetic data generated from a prior prediction function, in a principled way that is not available to RF. The step-like generative likelihood function arising from the tree partition structure does not reflect our beliefs about the true sampling distribution; the trees are just a convenient compression of the data. Because of this we simply update the prior in the MDP by specifying $\pi(\gamma | x_{1:n}) = \pi(\gamma)$. Details of our implementation of BRF can be found in Section 3 of the Supplementary Material.

To demonstrate the ability of BRF to incorporate prior information, we conducted the following experiment. For 13 binary classification datasets from the UCI ML repository \cite{Dua:2017}, we constructed a prior, training and test dataset of equal size. We measured test dataset predictive accuracy for three methods (relative to an RF trained on the training dataset only): BRF (c=0) (a non-informative BRF  with $c=0$, trained on the training dataset only), BRF (c>0) (a BRF trained on the training dataset, incorporating prior pseudo-samples from a non-informative BRF trained on the prior dataset, setting $c$ equal to the number of observations in the prior dataset), and RF-all (an RF trained on the combined training and prior datasets). See Fig. \ref{fig:boxplots} for boxplots of the test accuracy over 100 repetitions. 

As detailed in Section 3 of the Supplementary Material, for small $c$ we find that BRF and RF have similar performance, but as $c$ increases, more weight is given to the externally-trained component and we find that BRF outperforms RF.
The best performance of our BRF tends to occur when $c$ is set equal to the number of samples in the external training dataset, in line with our intuition of the role of $c$ as an effective sample size. Overall, the BRF accuracy is better than that of RF, and close to that of RF-all. BRF may have privacy benefits over RF-all as it only requires synthetic samples; both the original data and the model can be kept private.

\begin{figure}[!htbp]
 \centering
   \includegraphics[scale=0.78]{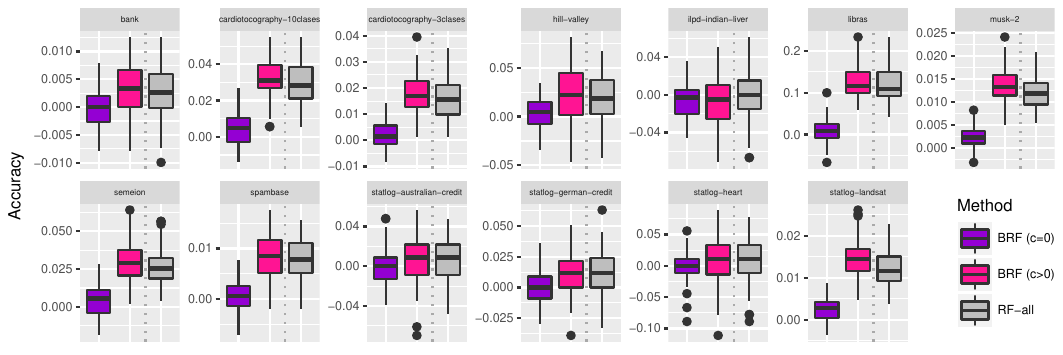}
  \caption{Box plot of classification accuracy minus that of RF, for 13 UCI datasets.}
  \label{fig:boxplots}
\end{figure}

\subsection{Direct updating of utility functions [S3]: population median}
We demonstrate direct inference for a functional of interest  using the population median, under a misspecified univariate Gaussian model, as an example, where with parameter of interest is $\alpha_0 = \argmin_\alpha \int \vert \alpha-x \vert dF_0(x)$, and an MDP prior centered at a $\cN(\theta,1)$ with prior $\pi(\theta) = \cN(0,10^2)$ and data generated from a skew-normal distribution. We use the posterior bootstrap to generate posterior samples that incorporate the prior model information with that from the data. Figure \ref{fig:posterior_median} presents histograms of posterior medians given a sample of $20$ observations from a skew-normal distribution with mean $0$, variance $1$ and median approximately $-0.2$. The left-most histogram is sharply peaked at the sample median but does not have support outside of $(x_{\min}, x_{\max})$. As $c$ grows smoothness from the parametric model is introduced to a point where the normal location parameter is used.

\begin{figure}[!htbp]
 \centering
   \includegraphics[scale=0.9]{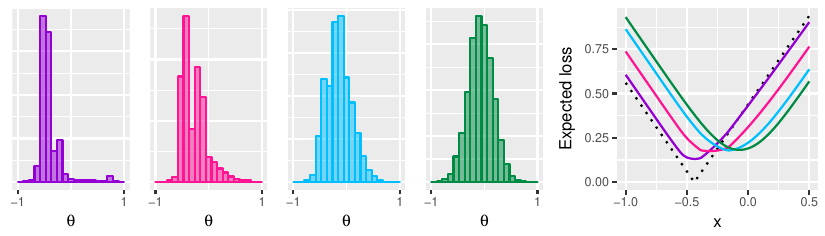}
  \caption{Posterior histogram for median (left to right) $c=0,20,80,1000$. Right-most: posterior expected loss as a function of observation $x$. Dotted line shows the loss to the sample median.}
  \label{fig:posterior_median}
\end{figure}

\section{Discussion}\label{sec:discussion}
We have introduced a new approach for scalable Bayesian nonparametric learning, NPL, for parametric models that facilitates prior regularization via a baseline model, and corrects for model misspecification by incorporating an empirical component that has greater influence as the number of observations grows. A concentration parameter $c$ encodes subjective beliefs on the validity of the model; $c=\infty$ recovers Bayesian updating under the baseline model, and $c=0$ ignores the model entirely. Under regularity conditions, asymptotically, our method closely matches parametric Bayesian updating if the posited model were indeed true, and will provide an asymptotically superior predictive if the model is misspecified. The NP posterior predictive mixes over the parametric model space as opposed to targeting $F_0$ directly, though this may aid interpretation compared to fully nonparametric approaches. Additionally, our construction admits a trivially parallelizable sampler once the parametric posterior samples have been generated (or if $c=0$).

Our approach can be seen to blur the boundaries between Bayesian and frequentist inference. Conventionally, the Bayesian approach conditions on  data and treats the unknown parameter of interest as if it was a random variable with some prior on a known model class. The frequentist approach treats the object of inference as a fixed but unknown constant and characterizes uncertainty through the finite sample variability of an estimator targeting this value. Here we randomize an objective function (an estimator) according to a Bayesian nonparametric prior on the sampling distribution, leading to a quantification of subjective beliefs on the value that would be returned by the estimator under an infinite sample size. 

At the heart of our approach is the notion of Bayesian updating via randomized objective functions through the posterior bootstrap. The posterior bootstrap acts on an augmented dataset, comprised of data and posterior pseudo-samples, under which randomized maximum likelihood estimators provide a well-motivated quantification of uncertainty while assuming little about the data-generating mechanism. The precursor to this is the weighted likelihood bootstrap, which utilized a simpler form of randomization that ignored prior information. Our approach provides scope for quantifying uncertainty for more general machine learning models by randomizing their objective functions suitably.

\section*{Acknowledgements} SL is funded by the EPSRC OxWaSP CDT, through EP/L016710/1. CH gratefully acknowledges support for this research from the MRC, The Alan Turing Institute, and the Li Ka Shing foundation.
\bibliography{library}


\newpage
\appendix

\section{Setting the MDP concentration parameter}\label{app:setting_c}

The MDP concentration parameter $c$ represents the subjective faith we have in the model: setting $c=0$ means we discard the baseline model completely; whereas for $c=\infty$ we obtain the Bayesian posterior, equivalent to knowing that the model is true. 

One way that the concentration parameter may be specified in practice is via the prior uncertainty of a functional. For example, we could set $c$ using a priori variance of the population mean. Using standard properties of the Dirichlet process, it is straightforward to show that under an $\text{MDP}(\pi(\theta), c, f_\theta(\cdot) )$, the variance in the mean functional is given by
\begin{equation*}
\text{var}\, \mu(\theta) \ + \ \frac{1}{1+c}\, \E \,\sigma^2(\theta)
\end{equation*}
where $\mu(\theta)$ is the mean of $x | \theta$ under the model, and $\sigma^2(\theta)$ is the variance of $x | \theta$, and the variance and expectation in the expression above are with respect to the prior $\theta \sim \pi$. Thus, if we can elicit a prior variance over the population mean, this will lead directly to a specific setting for $c$.

Another option is the Bayesian learning of $c$ via a hyper-prior; see \cite{ghosal2017fundamentals}, Section 4.5 for details. In practice it may be preferable to consider a number of difference values of $c$; once the parametric posterior samples are generated the posterior for each value of $c$ could be computed in parallel.

\section{Posterior Bootstrap for adaptive Nonparametric Learning}\label{app:posterior_bootstrap_aNPL}

We present the posterior bootstrap for adaptive Nonparametric Learning (aNPL), as discussed in Section 2.6 of the paper. This algorithm uses the stick-breaking construction of the Dirichlet process, as opposed to the Dirichlet distribution approximation of Section 2.5. The stick-breaking threshold, at which point the stick-breaking process ceases, is a function of the number of observations $n$ directly, meaning that for large enough $n$ the parametric model will be ignored, switching off the pseudo-samples from the parametric posterior predictive. 

\begin{center}
\begin{minipage}{.85\linewidth}
\begin{algorithm}[H]
\SetAlgoLined
\KwData{	Dataset $x_{1:n} = (x_1,\ldots,x_n)$. \\
				Parameter of interest $\alpha_0 = \alpha(F_0) = \argmax_\alpha \int u(x, \alpha) dF_0(x)$. \\
				Mixing posterior $\pi(\gamma | x_{1:n})$, concentration parameter $c$, centering model $f_\gamma(x)$.\\
				aNPL stick-breaking tolerance $\epsilon$.}
\Begin{
\For{$i = 1,\ldots,B$}{
Draw centering model parameter $\gamma^{(i)} \sim \pi(\gamma | x_{1:n} )$\;
Draw model vs data weight: $s^{(i)} \sim \text{Beta}(c,n)$\;
Set data weights $w^{(i)}_{1:n} = (1 - s^{(i)}) v^{(i)}_{1:n}$, with $v^{(i)}_{1:n} \sim \text{Dirichlet}(1,\ldots,1)$\;
Set $v_\text{rem} = c / (c+n)$,\ \ $T=0$\;
 \While{$v_\text{rem} \geq \epsilon$}{
 $T \rightarrow T+1$\;
 Draw posterior pseudo-sample $x^{(i)}_{n+T} \sim f_{\gamma^{(i)}}$\;
 Stick-break $w^{(i)}_{n+T} = s^{(i)} \prod_{j=1}^{T-1} (1 - v_{n+j}^{(i)}) v^{(i)}_{n+T}$ with $v^{(i)}_{n+T} \sim \text{Beta}(1,c)$\;
 Update $v_\text{rem} = c / (c+n) \prod_{j=1}^{T} (1 - v_{n+j}^{(i)}) $\;
 }
  Compute parameter update\\ $\,\qquad \tilde{\alpha}^{(i)} = \argmax_\alpha \left\{ \sum\limits_{j=1}^{n} w_j^{(i)} u(x_j, \alpha) +  \sum\limits_{j=1}^{T} w_{n+j}^{(i)} u(x_{n+j}^{(i)}, \alpha) \right\};$
}
Return NP posterior sample $\{\tilde{\alpha}^{(i)} \}_{i=1}^B$.
}
 \caption{The Posterior Bootstrap for adaptive Nonparametric Learning}\label{algo:aNPL}
\end{algorithm}
\end{minipage}
\end{center}

\section{Bayesian Random Forests - further details}\label{app:brf}

We base our Bayesian Random Forests (BRF) implementation on the Scikit-learn \verb'RandomForestClassifier' class \cite{scikit-learn}, with an alternative fitting method. Instead of applying bootstrap aggregation as RF does, for each tree we construct an augmented dataset, containing the training data and prior pseudo-data. We then fit the decision tree to the augmented dataset, suitably weighted. 

If the concentration parameter $c$ is set to zero then no prior data is necessary; we just need to generate sample weights $w_{1:n} \sim \text{Dirichlet}(1,\ldots,1)$ for observations $x_{1:n}$ and fit a tree to the weighted training dataset. When $c>0$ we use a BRF with $c=0$, trained on the prior data, to generate the prior pseudo-data. For each tree in the BRF we generate a pseudo-dataset before training a decision tree on the augmented dataset containing the prior pseudo-dataset and the training dataset. If $T$ pseudo-observations are generated, we generate a vector of $\text{Dirichlet}(1,\ldots,1,c/T,\ldots,c/T)$ weights, relating to observations $x_{1:(n+T)}$, where $x_{1:n}$ is the training dataset and $x_{(n+1):(n+T)}$ is the prior pseudo-data.

When training RF decision trees, terminal node splits are made until the leaves only contain training data from a single class. This is computationally problematic for our BRF method, as after augmenting our internal training data with a large pseudo-dataset of samples, trees may need to grow ever deeper until leaf purity is attained. To avert this issue, we threshold the proportion of sample weight required to be present across samples at a node before a split can take place. This can be done via the \verb'min_weight_fraction_leaf' argument of the Scikit-learn \verb'RandomForestClassifier' class. In our testing we set a weight proportion threshold of $0.5(n+c)^{-1}$.

Internal, external and test data were obtained by equal sized class-stratified splits. Each forest contained 100 trees, and 10,000 external pseudo-samples were generated for the BRFs with $c>0$.  Predictions were made in our BRF by majority vote across the forest of trees.

A plot of the mean classification accuracy as a function of $c$ is given for the Bank dataset in Figure \ref{fig:bank}. As discussed in Section 3.3, for small $c$ the external data is given little weight and our method performs similarly to an RF trained only on the internal data. As $c$ grows, the accuracy improves, peaking around where $c$ is equal to the number of external samples. This peak performance is roughly the same as that attained by an RF trained on both the external and internal data.

\begin{figure}[!htbp]
  \centering
   \includegraphics[scale=1]{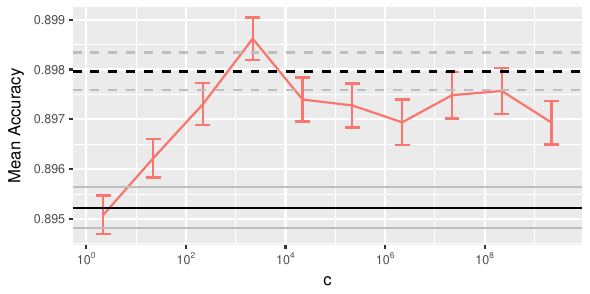}
  \caption{Mean classification accuracy vs $c$ for RF (black solid), RF-all (black dashed), and BRF (red) for varying $c$. over 100 repetitions, for the Bank dataset. Error bars for BRF and the grey lines for RF and RF-all represent one standard error from the mean. The BRF with $c=0$ has a mean accuracy of 0.895 and a standard error of 0.0004.}\label{fig:bank}
\end{figure}

Note that when setting $c=0$ our method is equivalent to Bayesian bootstrapping random decision trees. \cite{Taddy2015} uses the Bayesian bootstrap as an underlying model of the data-generating mechanism, viewing the randomly weighted trees generated as a posterior sample over a functional of the unknown data-generating mechanism, similar to our construction. Previously, a number of attempts have been made in the literature to construct Bayesian models for decision trees \cite{Chipman1998,Denison1998} but the associated MCMC sampling routines tend to mix poorly. Our method, whilst remaining honest that our trees are poor generative models, is very similar to RF in nature and performance. However, it has the additional benefit of enabling the user to incorporate prior information via a prediction function, in a principled manner.

\end{document}